\documentclass[final,5p,times,twocolumn]{elsarticle}

\usepackage{amsmath,amsfonts,amssymb}
\usepackage{amsthm}
\usepackage{booktabs}
\usepackage{array}
\usepackage{algorithm}
\usepackage{algorithmic}
\usepackage{graphicx}
\usepackage{url}
\usepackage[hidelinks]{hyperref}
\usepackage{caption}

\newtheorem{assumption}{Assumption}
\newtheorem{proposition}{Proposition}

\journal{Engineering Applications of Artificial Intelligence}

\begin{document}
	
	\begin{frontmatter}
		
		\title{Verification-Gated Agentic Mission-State Governance for Intelligent Industrial Multi-Robot Systems}
		
		\author[inst1]{Guoqin Tang}
		\author[inst1]{Qingxuan Jia}
		\author[inst1]{Yichen Tan}
		\author[inst1]{Zeyuan Huang}
		\author[inst1]{Ning Ji}
		\author[inst1]{Gang Chen\corref{cor1}}
		\ead{buptcg@163.com}
		
		\cortext[cor1]{Corresponding author.}
		
		\affiliation[inst1]{
			organization={Beijing University of Posts and Telecommunications},
			city={Beijing},
			country={China}
		}

\begin{abstract}
Agentic artificial intelligence is increasingly used to decompose industrial tasks, propose robot actions, and adapt execution plans in dynamic cyber-physical environments. However, autonomous proposal generation alone does not guarantee that multi-robot industrial systems preserve task dependencies, resource ownership, safety holds, or repair boundaries during long-horizon execution. This paper introduces a verification-gated agentic mission-state governance framework for intelligent industrial multi-robot systems. The framework maintains two synchronized state objects: an evolving task forest for persistent hierarchy, delayed grounding, and repairable substructures; and a governed blackboard for online execution state, robot traces, resource locks, world beliefs, proposals, verification records, and scene-temporary constraints. From each forest--blackboard snapshot, a derived execution coupling topology exposes cross-branch dependencies for proposal verification, parallel-commit eligibility, and bounded repair. Candidate assignments, repairs, deferrals, and constraint updates may be generated by heuristic, optimization, or agentic reasoning modules, but they can update the committed mission state only after deterministic verification and atomic commit. We evaluate the framework in an indoor factory multi-robot scenario, 30-seed remote-construction stress benchmarks, structural ablations, and scalability probes. The results show improved verified and safety-audited mission-state progress with fewer invalid commitments, lock conflicts, duplicate assignments, abandoned nodes, and disruptive repairs under modeled mission predicates. The study positions agentic AI as a proposal-generating layer governed by inspectable mission-state verification rather than as an unchecked execution authority.
\end{abstract}

\begin{keyword}
Agentic AI \sep Multi-robot systems \sep Industrial cyber-physical systems \sep Mission-state governance \sep Verification-gated repair \sep Intelligent industrial systems
\end{keyword}

\end{frontmatter}

\section{Introduction}

Intelligent industrial systems increasingly combine autonomous robots, industrial cyber-physical infrastructure, digital-twin state models, and artificial-intelligence modules that can perceive, reason, plan, and act in dynamic environments. Smart factories, automated logistics, remote construction, infrastructure inspection, and field maintenance all require multiple robots to coordinate over long horizons while task readiness, robot availability, resource ownership, and operational constraints evolve during execution \cite{gerkey2004formal_mrta,korsah2013comprehensive_mrta,calvo2025heterogeneous_mrta}. In this setting, the central challenge is not only to generate a task allocation or a plan. The system must govern which agentic proposals are allowed to become committed mission state.

This governance problem has become more important as language-model, optimization, and heuristic modules are increasingly used as proposal sources for embodied agents. Such modules can decompose tasks, suggest assignments, explain failures, or propose repairs \cite{ahn2022saycan,huang2023voxposer,rana2023sayplan,vemprala2024chatgpt_robotics,kannan2023smartllm}. Yet an industrial robot team cannot safely treat every generated proposal as executable. A plausible assignment may violate a workstation mutex, residual-energy bound, precedence relation, spatial access condition, temporary protective hold, or already committed robot trace. Agentic AI for industrial systems therefore needs a mission-state layer that separates proposal generation from verified commitment.

Existing multi-robot allocation, symbolic planning, behavior-tree, and feedback-driven execution methods address important parts of this problem, but they rarely maintain persistent task hierarchy, online execution commitments, temporary operational constraints, cross-branch execution couplings, and repair boundaries as synchronized state records. Static task graphs can starve the executable frontier when delayed observations expose new work. Flat blackboard memories can record events without governing which updates remain consistent. Global replanning can recover feasibility, but may rewrite protected work outside the affected region. These limitations motivate a representation in which persistent hierarchy, transient industrial constraints, and verified execution commitments are governed together.

This paper proposes a verification-gated agentic mission-state governance framework for intelligent industrial multi-robot systems. The framework maintains an evolving task forest and a governed blackboard as two canonical state objects. It derives an execution coupling topology from their current snapshot to expose precedence, flow, resource, spatial, synchronization, and protective couplings without storing a third persistent state. Agentic, heuristic, or optimization modules may submit typed proposals, but assignments, deferrals, repairs, diagnostics, and constraint updates are committed only after deterministic verification and atomic state update. Disturbances are localized by hard-edge closure over the affected execution neighborhood.

The main contributions are:
\begin{itemize}
    \item \textbf{Agentic mission-state governance:} a runtime layer that allows autonomous proposal generation while protecting industrial multi-robot mission state through deterministic verification and atomic commit.
    \item \textbf{Task-forest blackboard representation:} a two-object state model that separates persistent task hierarchy from online robot traces, resource locks, world beliefs, verification records, and scene-temporary constraints.
    \item \textbf{Coupling-aware bounded repair:} a derived execution coupling topology that restricts repair to the hard-coupled affected neighborhood while preserving protected work outside that scope.
    \item \textbf{Industrial validation and stress testing:} evaluation in an indoor factory scenario, 30-seed remote-construction benchmarks, structural ablations, and scale probes under modeled mission predicates.
\end{itemize}

\begin{figure*}[!t]
\centering
\includegraphics[width=0.95\textwidth]{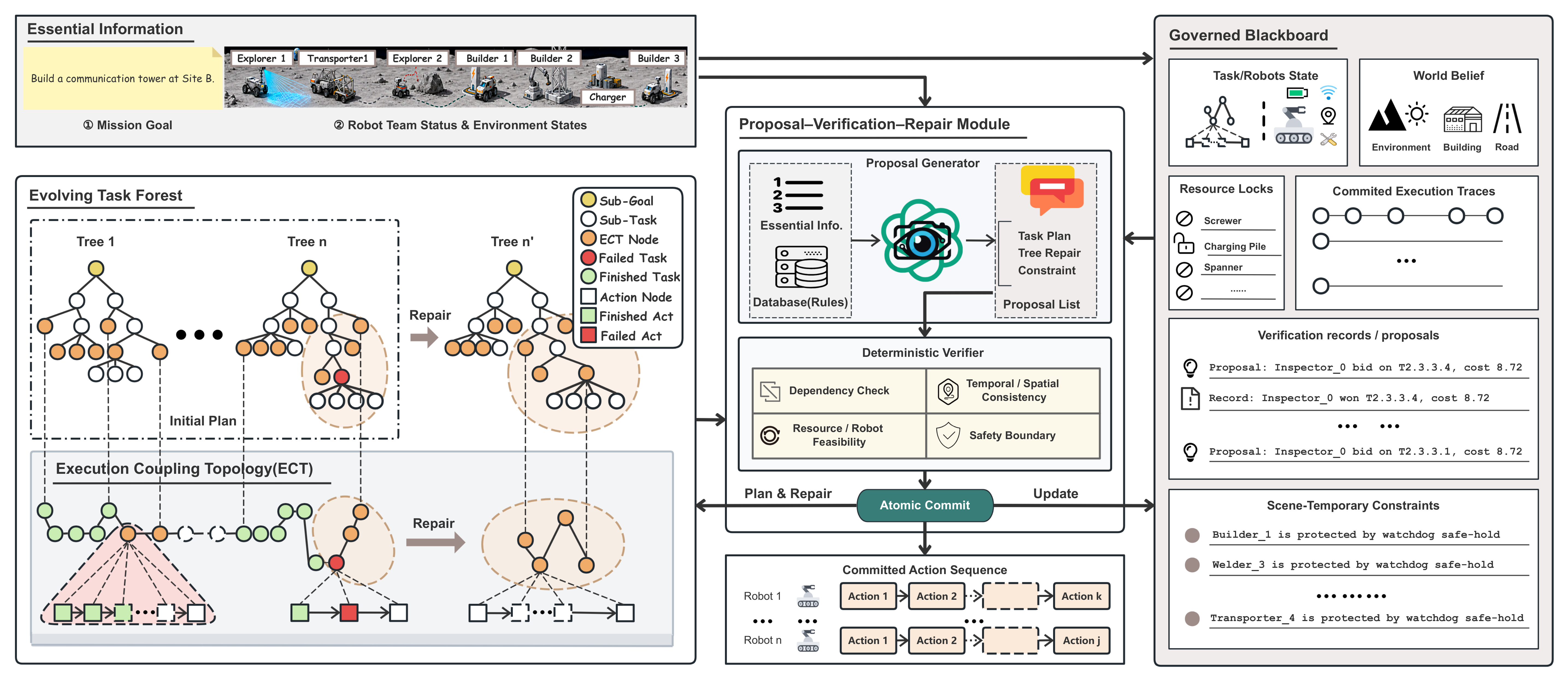}
\caption{Verification-gated agentic mission-state governance framework. Agentic, heuristic, or optimization modules may propose assignments, repairs, deferrals, diagnostics, and constraint updates, but only verified proposals can atomically update the evolving task forest and governed blackboard. The execution coupling topology is derived from the current state snapshot and used for eligibility, verification, and bounded repair.}
\label{fig:framework_structure}
\end{figure*}

\section{Related Work}
\label{sec:related_work}

\subsection{Task-Structured Long-Horizon Multi-Robot Execution}

Long-horizon multi-robot autonomy requires task allocation, scheduling, path feasibility, and execution monitoring to be considered jointly. Classical and recent MRTA methods provide mature assignment tools under capability, time, energy, and coordination constraints \cite{gerkey2004formal_mrta,korsah2013comprehensive_mrta}. Recent studies further incorporate long-duration factors such as limited endurance, recharge, task relaying, coalition execution, dense-workspace coupling, and task-and-motion feasibility \cite{calvo2025heterogeneous_mrta,lai2025roboballet,yang2023piginet}. These methods show that mission performance depends on more than one-shot robot-task matching.

Symbolic and hierarchical planning provide a complementary reference point for long-sequence reasoning. PDDL/PDDL2.1, heuristic-search planners, HTN planners, and robotic planning infrastructures such as ROSPlan encode preconditions, effects, temporal constraints, resources, and decomposition knowledge in explicit models \cite{fox2003pddl21,hoffmann2001ff,helmert2006fast_downward,nau2003shop2,cashmore2015rosplan}. They are highly relevant to long-horizon execution, but they typically assume that the maintained domain model, current task instance, execution state, and update record are already consistent. Our work targets this runtime-maintenance problem: how to keep an evolving task hierarchy, temporary operational constraints, resource locks, committed traces, verification records, and repair boundaries synchronized during execution.

\subsection{LLM-Assisted Multi-Robot Planning and Verifiable Proposals}

LLMs and multimodal foundation models have recently been used for task interpretation, decomposition, grounding, and high-level robot coordination \cite{ahn2022saycan,vemprala2024chatgpt_robotics,driess2023palme,zitkovich2023rt2}. Broader embodied-AI surveys organize this evolution around perception, decision, and execution modules, reinforcing the need to distinguish proposal generation from executable behavior in deployed robots \cite{chen2025embodied_ai_survey}. Situated systems constrain language reasoning through available APIs, 3D value maps, scene graphs, path planners, precondition checks, or iterative feedback \cite{singh2023progprompt,huang2023voxposer,rana2023sayplan}. Multi-robot systems such as SMART-LLM, DART-LLM, COHERENT, LiP-LLM, and EmboTeam are especially close to our setting because they motivate task decomposition, dependency extraction, coalition formation, proposal--feedback execution, and planner or behavior-tree integration \cite{kannan2023smartllm,wang2024dartllm,liu2025coherent,obata2024lipllm,zeng2026emboteam}.

These systems indicate the value of foundation models for semantic decomposition and high-level coordination, but they also show why generated plans should not be committed directly. PlanBench demonstrates that strong LLMs can still fail at systematic planning and reasoning about change \cite{valmeekam2023planbench}; practical robot systems therefore add affordance grounding, precondition checking, simulator feedback, geometric feasibility filtering, or planner interfaces \cite{ahn2022saycan,singh2023progprompt,rana2023sayplan,yang2023piginet}. In multi-robot execution, a semantically plausible proposal may violate capability, residual energy, temporal order, resource exclusivity, safety boundaries, or already committed mission constraints. Recent surveys likewise identify hallucination, latency, scalability, and real-world adaptability as deployment risks for LLM-based multi-robot systems \cite{li2025llm_mrs_survey}. We therefore treat LLM or VLM reasoning as a proposal source rather than an authority: candidate repairs, diagnostic branches, and temporary constraints must pass deterministic mission-state verification before they become committed state.

\subsection{Feedback, Contingency Handling, and Local Repair}

Feedback-driven planning and online repair are necessary because delays, partial failures, blocked traversal, energy shortage, and resource conflicts can invalidate an initially feasible branch. SayPlan and COHERENT use feedback to refine plans or coordinate execution \cite{rana2023sayplan,liu2025coherent}, long-endurance MRTA methods incorporate online repair under dynamic conditions \cite{calvo2025heterogeneous_mrta}, and EmboTeam uses a shared state mechanism to coordinate embodied multi-robot collaboration \cite{zeng2026emboteam}. Classical blackboard architectures provide a broader precedent for coordinating heterogeneous reasoning sources through shared problem-solving state and control \cite{hayesroth1985blackboard_control,nii1986blackboard_model}.

The remaining gap is not simply whether feedback exists, but how feedback changes are governed. Existing feedback loops or blackboard-like memories often serve as communication logs, state caches, or plan-adjustment buffers. They do not necessarily couple persistent task hierarchy, online commitments, temporary operational constraints, derived cross-branch dependencies, and verified repair boundaries. Without this separation, a repair module may update local assignments without updating the task structure, or a graph update may ignore active resource locks, completed segments, protective holds, or committed traces. Our framework makes this coupling explicit by separating persistent hierarchy, governed blackboard records, derived ECT snapshots, and verifier-gated commits.

\subsection{Runtime Governance and Reconfiguration of Large Autonomous Systems}

The proposed framework also connects to systems-engineering work on runtime governance, supervisory control, runtime assurance, and model-based system representation. Supervisory control for discrete-event systems frames control as restricting behavior to satisfy modeled constraints \cite{ramadge1987supervisory_control}. Model-based systems engineering emphasizes explicit system models, interfaces, requirements, and cross-view consistency \cite{madni2018mbse}. Runtime-assurance architectures separate advanced decision components from an assurance layer that can restrict unsafe decisions when modeled constraints are at risk \cite{cofer2020runtime_assurance}.

These ideas motivate explicit models and verification gates, but they do not by themselves specify how a long-horizon multi-robot mission should maintain an evolving task hierarchy, temporary constraints, resource locks, committed traces, derived execution coupling, and disturbance-localized repair in one execution loop. We adapt this systems perspective to mission-state governance. The forest maintains persistent mission structure, the governed blackboard maintains online records and temporary constraints, and the derived ECT exposes the current coupling surface used for deterministic verification and bounded reconfiguration.

\section{Proposed Framework}
\label{sec:method}

\subsection{Problem Setup and Mission-State Governance}
\label{sec:method_overview}

We consider a heterogeneous robot team
\begin{equation}
\mathcal{M}=\{r_1,\ldots,r_N\}
\end{equation}
executing a long-horizon mission initialized from a high-level task description $\mathcal{T}^{0}$ and a partial world state $\mathcal{X}_0$. During execution, robots acquire observations, complete or fail assigned work, occupy and release resources, and encounter scene-dependent operational constraints. The goal is not to compute a static optimal plan at deployment time; it is to maintain a high-level mission state that remains executable, verifiable, and locally repairable as these conditions evolve.

At decision epoch $t$, the governance layer observes
\begin{equation}
\mathcal{S}_t = (\mathcal{F}_t,\mathcal{B}_t,\mathcal{G}_t,\Pi_t,\mathcal{U}_t),
\label{eq:system_state}
\end{equation}
where $\mathcal{F}_t$ is the evolving task forest, $\mathcal{B}_t$ is the governed blackboard, $\mathcal{G}_t$ is the derived ECT, $\Pi_t$ is the candidate proposal set, and $\mathcal{U}_t$ is the exposed execution-unit set. Only $\mathcal{F}_t$ and $\mathcal{B}_t$ are canonical maintained state objects; $\mathcal{G}_t$ is recomputed from their current snapshot.

The framework maintains a modeled consistency invariant
\begin{equation}
I(\mathcal{F}_t,\mathcal{B}_t)=1,
\label{eq:state_invariant}
\end{equation}
covering dependency consistency, robot-capability feasibility, residual energy, temporal consistency, resource exclusivity, temporary constraints, and repair-boundary preservation. These predicates are high-level mission-state checks. The framework assumes that lower-level perception, motion planning, localization, and control modules provide conservative feasibility and execution-feedback records to the blackboard; it does not replace continuous collision avoidance, controller-level safety, or physical verification of task success.

Fig.~\ref{fig:framework_structure} summarizes the framework as four layers: mission inputs and feedback, canonical mission state, derived coupling view, and governed transition. The remainder of this section follows the three technical pillars of the method: the two canonical state objects, the derived ECT, and the verification-gated transition and repair mechanism. Detailed state fields, verifier predicates, and the benchmark assignment cost are provided in Appendix~\ref{app:formal_details}.

\subsection{Canonical Mission State: Task Forest and Governed Blackboard}
\label{sec:canonical_state}

The canonical state representation separates persistent mission structure from online execution records. The evolving task forest $\mathcal{F}_t$ stores hierarchical decomposition, node-level contracts, delayed grounding, and repairable substructures. A node may be open, ready, committed, running, blocked, completed, failed, frozen, or abandoned, but these states are governed transitions rather than free planner outputs. In unfamiliar environments, delayed grounding allows task attributes or descendants to remain unresolved until observations make them reliable. The executable frontier is therefore restricted to grounded nodes whose predecessors and resources are feasible:
\begin{equation}
\begin{aligned}
\mathcal{V}^{\mathrm{front}}_t
=
\{\, v\in\mathcal{V}^{\mathrm{exec}}_t
\mid\;&g_v(t)=\mathrm{grounded},\\
&\mathrm{Pred}(v,t),\\
&\mathrm{Res}(v,t),\\
&\chi_v(t)=\mathrm{ready}\,\}.
\end{aligned}
\label{eq:executable_frontier}
\end{equation}

The governed blackboard $\mathcal{B}_t$ stores the online records needed to evaluate this frontier and to commit verified updates: task-state projections, robot states, world belief, resource locks, committed traces, candidate proposals, verification records, and scene-temporary constraints. The blackboard is governed in three senses. It is node-anchored because execution-relevant state is bound to task-forest nodes; write-restricted because heuristic, optimization, rule-based, or semantic modules may write candidates but not committed state; and atomically committed because a verified update changes the forest and blackboard as one synchronized transition.

Scene-temporary constraints remain on the blackboard rather than being embedded permanently into the forest. Candidate constraints may originate from world-belief updates, resource locks, execution feedback, operator input, or semantic/VLM proposals, but only verified-hard constraints can induce hard ECT couplings. Verified-soft constraints affect ranking or costs, protective constraints block risky commitments until resolved, and safety-critical holds are never bypassed by diagnostic trials. A verified proposal induces the canonical state transition
\begin{equation}
(\mathcal{F}_{t+1},\mathcal{B}_{t+1})
=
\operatorname{Commit}(\mathcal{F}_{t},\mathcal{B}_{t},p),
\label{eq:commit_rule}
\end{equation}
after which the ECT is regenerated rather than stored.

\subsection{Derived Execution Coupling Topology}
\label{sec:forest_derived_ect}

The ECT is a decision-time projection that exposes execution couplings among semantic subtree units. It is derived from the canonical state:
\begin{equation}
\mathcal{G}^{\mathrm{ECT}}_t
=
\Pi_{\mathrm{ECT}}(\mathcal{F}_t,\mathcal{B}_t),
\label{eq:ect_projection}
\end{equation}
and represented as
\begin{equation}
\mathcal{G}^{\mathrm{ECT}}_t
=
(\mathcal{U}_t,\mathcal{R}_t).
\label{eq:ect_graph}
\end{equation}
The units $\mathcal{U}_t$ are selected from an exposed antichain of task-forest subtrees, so a branch and its descendant are not simultaneously treated as independent global execution units. This lets the verifier reason at the semantic level needed for the current decision rather than over every primitive action.

The typed edge set $\mathcal{R}_t$ contains precedence, enabling-flow, resource-mutex, synchronization, spatial-coupling, and soft-preference edges:
\begin{equation}
\mathcal{R}_t =
\mathcal{R}^{\mathrm{pre}}_t\cup
\mathcal{R}^{\mathrm{flow}}_t\cup
\mathcal{R}^{\mathrm{mutex}}_t\cup
\mathcal{R}^{\mathrm{sync}}_t\cup
\mathcal{R}^{\mathrm{spatial}}_t\cup
\mathcal{R}^{\mathrm{soft}}_t .
\label{eq:ect_edges}
\end{equation}
Hard edges are induced only by persistent node contracts and verified-hard blackboard records. Candidate constraints do not enter the ECT, soft edges do not enter hard repair closure, and protective holds are handled as verifier guards. The absence of a hard ECT edge does not by itself authorize concurrent execution; it only makes two units eligible for parallel commitment, after which blackboard-level verification still checks capabilities, energy, resource locks, traces, spatial access, temporary constraints, and safety boundaries.

\subsection{Verification-Gated Commit and Bounded Repair}
\label{sec:verified_repair}
\label{sec:bounded_subforest_repair}

The framework treats assignments, deferrals, reorderings, diagnostic actions, repairs, and constraint updates as typed proposals. Proposal generation is intentionally modular: it may use heuristics, optimization, rule-based repair, or language/VLM reasoning. These modules are proposal sources, not execution authorities. A proposal can modify the committed mission state only if it passes deterministic verification:
\begin{equation}
\begin{aligned}
\Omega(p,\mathcal{F}_t,\mathcal{B}_t,\mathcal{G}_t)=
&\ \Omega_{\mathrm{dep}}
\wedge\Omega_{\mathrm{cap}}
\wedge\Omega_{\mathrm{eng}}\\
&\wedge\Omega_{\mathrm{temp}}
\wedge\Omega_{\mathrm{res}}
\wedge\Omega_{\mathrm{bd}}.
\end{aligned}
\label{eq:proposal_verification}
\end{equation}
The predicate checks forest and ECT dependencies, robot capabilities, residual energy, temporal anchors, resource locks, active verified-hard constraints, safety holds, and repair-boundary consistency. The valid proposal set and selected update are
\begin{equation}
\mathcal{P}^{\mathrm{valid}}_t=
\{p\in\Pi_t\mid \Omega(p,\mathcal{F}_t,\mathcal{B}_t,\mathcal{G}_t)=1\},
\label{eq:valid_proposals}
\end{equation}
\begin{equation}
p^{*}=\arg\min_{p\in\mathcal{P}^{\mathrm{valid}}_t}J(p).
\label{eq:selected_proposal}
\end{equation}
If no valid proposal exists during normal execution, the framework makes no new high-level commitment in that decision cycle.

\begin{assumption}[Sound projected state]
\label{ass:sound_projection}
At the beginning of a high-level decision cycle, the blackboard projection is conservative with respect to the executed system state: completed and running task-node states are current, robot capability labels are correct, residual-energy entries are lower bounds, resource locks include active reservations, active temporary constraints are not optimistic over their guarded scope, and the world-belief quantities used by the verifier are not optimistic over the next commitment horizon.
\end{assumption}

\begin{proposition}[Verified commitment invariance]
\label{prop:verified_commitment}
Under Assumption~\ref{ass:sound_projection}, if a proposal $p$ satisfies $\Omega(p,\mathcal{F}_t,\mathcal{B}_t,\mathcal{G}_t)=1$ and the commit operation in~\eqref{eq:commit_rule} is atomic, then the committed update preserves the hard constraints explicitly checked by the verifier: dependency consistency, capability coverage, residual-energy feasibility, temporal consistency, resource exclusivity, active verified-hard temporary constraints, safety-critical holds, and boundary consistency with completed or protected mission segments.
\end{proposition}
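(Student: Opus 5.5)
The argument is by decomposition over the conjuncts of $\Omega$ in~\eqref{eq:proposal_verification}, followed by a transfer step from the projected blackboard state to the executed system state. First I would fix what ``preserves'' means. Let $C_k(\mathcal{F},\mathcal{B})$ denote the hard-constraint predicate associated with each conjunct $\Omega_k$, for $k\in\{\mathrm{dep},\mathrm{cap},\mathrm{eng},\mathrm{temp},\mathrm{res},\mathrm{bd}\}$. The temporary-constraint and safety-hold checks are folded into $\Omega_{\mathrm{res}}$ and $\Omega_{\mathrm{bd}}$ as the text indicates. The claim then reads: if $I(\mathcal{F}_t,\mathcal{B}_t)=1$ holds before the cycle and $\Omega(p,\cdot)=1$, then each $C_k$ holds both on $(\mathcal{F}_{t+1},\mathcal{B}_{t+1})=\operatorname{Commit}(\mathcal{F}_t,\mathcal{B}_t,p)$ and on the true executed state over the next commitment horizon. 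I would state explicitly that each $\Omega_k$ is constructed, per Appendix~\ref{app:formal_details}, as a check of $C_k$ on the post-commit state computed from $p$ and the current snapshot. This is the defining specification of the verifier, and the proof rests on it.

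\textbf{Step 1: preservation on the projected state.} For each $k$, $\Omega_k(p,\mathcal{F}_t,\mathcal{B}_t,\mathcal{G}_t)=1$ means that $C_k$ holds on the tentative successor. Atomicity of~\eqref{eq:commit_rule} ensures that the state actually written equals the tentative successor the verifier evaluated. No partial write (for example, the forest updated but the locks not) can be observed, and no other proposal interleaves. Hence each $C_k(\mathcal{F}_{t+1},\mathcal{B}_{t+1})$ holds.

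For dependency consistency I also need that $\mathcal{G}_t$ is the ECT projection $\Pi_{\mathrm{ECT}}(\mathcal{F}_t,\mathcal{B}_t)$ of~\eqref{eq:ect_projection}, recomputed from the same snapshot and not stored separately. Then any hard edge checked by the verifier reflects a node contract or a verified-hard record at time $t$. Candidate constraints and soft edges never weaken a hard check, because by construction they do not generate hard edges.

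\textbf{Step 2: transfer to the executed system.} Assumption~\ref{ass:sound_projection} makes the projection conservative, and each constraint is monotone in the direction of that conservatism. If the residual-energy entries are lower bounds, then feasibility against the lower bound implies feasibility against the true energy. If the recorded locks are a superset of the active reservations, then exclusivity against the recorded locks implies true exclusivity. Correct capability labels and current completed/running node states transfer the capability and dependency checks exactly. Non-optimistic temporary constraints and world-belief quantities transfer the temporal, spatial, and hold checks over the guarded scope and horizon. Boundary consistency follows because completed and protected segments are recorded currently, and $\Omega_{\mathrm{bd}}$ forbids modifying them.

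I would organize this step as a short lemma: each $C_k$ is antitone in the conservative order on projections. The predicate therefore pulls back from the projected state to the true state.

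\textbf{Main obstacle.} The hardest part is Step 2's monotonicity claim for the temporal and spatial predicates. ``Not optimistic'' must be read as an order in which the verifier's feasible set shrinks, and this has to hold uniformly across the commitment horizon. I would first try to define the order explicitly: tighter windows, larger blocked regions, and later release times. I would then check from the predicate definitions in the appendix that each check is a universally quantified inequality over those quantities, which makes it antitone in that order.

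I would also note the scope limits. The result covers only the modeled predicates, and it is conditioned on Assumption~\ref{ass:sound_projection} holding at the start of the cycle. It says nothing about low-level control safety.
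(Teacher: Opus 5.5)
Your Step 1 is essentially the paper's whole proof. $\Omega$ is a conjunction of the represented hard-constraint checks, and each conjunct holds. Atomic commit rules out any exposed partial update, such as the forest changed but the locks, traces, or constraint ledger not. Hence $(\mathcal{F}_{t+1},\mathcal{B}_{t+1})$ satisfies the checked predicates.

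You are more explicit than the paper on two points it leaves implicit, and both are worth keeping:
\begin{itemize}
\item $\Omega_k$ must be read as a check on the tentative successor; otherwise $\Omega_k=1$ at time $t$ says nothing about $t+1$.
\item $\mathcal{G}_t$ is recomputed from the same snapshot, so no candidate or soft edge enters a hard check.
\end{itemize}

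Two small refinements. First, under your full-successor reading of $\Omega_k$, the hypothesis $I(\mathcal{F}_t,\mathcal{B}_t)=1$ is never used. It becomes essential if you read the appendix's rejection conditions literally, as checks of $p$'s changes against existing records. Step 1 should then split into untouched records, covered by $I$, and new interactions, covered by $\Omega_k$; this matches the appendix's intended implication. Second, ``no other proposal interleaves'' does not follow from atomicity of the write. It comes from the loop structure of Algorithm~\ref{alg:governed_execution}: one selected $p^{*}$ per cycle, evaluated on a fixed snapshot.

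Step 2 is where the proposal overreaches. The paper does not transfer anything to the executed system. It invokes Assumption~\ref{ass:sound_projection} only as the setting in which the predicates are evaluated. The remark after the proposition and the Discussion both confine the guarantee to represented mission-state predicates. So the stronger meaning of ``preserves'' you fix at the outset is not what is claimed, and the antitonicity lemma it needs is not available:
\begin{itemize}
\item The assumption makes node states, capability labels, energy, locks, temporary constraints, and world-belief quantities conservative. It says nothing about predicted intervals or duration estimates such as $\tau_v^C=W_v/\sum_{r\in C}P_r$, on which the temporal-anchor and interval-overlap checks rest.
\item Synchronization couplings are two-sided, so there is no single pessimistic direction that shrinks their feasible set.
\item Even pessimistic durations can let event-triggered successors start earlier than predicted, so predicted non-overlap need not hold in execution.
\item The appendix gives only table-level predicate descriptions, so the universally-quantified-inequality form you plan to verify cannot be read off either.
\end{itemize}
Drop Step 2, or state it as a separate result under explicit monotonicity hypotheses. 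Step 1 alone proves the proposition as stated.
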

The proof is provided in Appendix~\ref{app:proofs}. The proposition is intentionally limited to represented mission-state predicates; it does not claim physical safety, global optimality, or eventual mission completion.

When a disturbance occurs, the repair routine first localizes the affected region instead of globally replanning. Given disturbed forest nodes $\mathcal{V}^{\mathrm{dist}}_t$, the disturbed units, hard-coupled affected units, and affected subforest are
\begin{equation}
\begin{aligned}
\mathcal{U}^{\mathrm{dist}}_t
&=\{u_i\in\mathcal{U}_t\mid
\operatorname{scope}(u_i)\cap\mathcal{V}^{\mathrm{dist}}_t\neq\emptyset\},\\
\mathcal{U}^{\mathrm{aff}}_t
&=\operatorname{cl}_{\mathcal{R}^{\mathrm{hard}}_t}
(\mathcal{U}^{\mathrm{dist}}_t),\\
\mathcal{F}^{\mathrm{aff}}_t
&=\operatorname{SubtreeClosure}_{\mathcal{F}_t}
\{\alpha(u_i)\mid u_i\in\mathcal{U}^{\mathrm{aff}}_t\}.
\end{aligned}
\label{eq:affected_subforest}
\end{equation}
Repairs are restricted to $\mathcal{F}^{\mathrm{aff}}_t$ and its projected blackboard context. Completed work, protected commitments, resource locks, temporal anchors, downstream interfaces, safety-critical holds, and hard-coupled protected boundary units form the boundary context. If no verified repair or restricted fallback is valid, the affected branch is marked blocked or deferred rather than committing an unchecked update.

\begin{proposition}[ECT-aware bounded repair locality]
\label{prop:bounded_locality}
Assume that $\mathcal{G}^{\mathrm{ECT}}_t$ is derived from a versioned snapshot of $(\mathcal{F}_t,\mathcal{B}_t)$, that $\mathcal{U}^{\mathrm{aff}}_t$ is computed by hard-edge closure as in~\eqref{eq:affected_subforest}, and that a repair proposal modifies only $\mathcal{F}^{\mathrm{aff}}_t$ and its projected blackboard context. If the repair satisfies $\Omega_{\mathrm{bd}}$ and all admitted temporary-constraint updates pass verification, then completed nodes, protected active commitments, resource locks, temporal anchors, safety-critical holds, hard-coupled protected boundary units, and unaffected robot execution traces outside the affected scope remain invariant under the repair commit.
\end{proposition}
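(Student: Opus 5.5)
The plan is to prove Proposition~\ref{prop:bounded_locality} by a frame argument on the commit operation~\eqref{eq:commit_rule}. The repair changes the canonical state only inside $\mathcal{F}^{\mathrm{aff}}_t$ and its projected blackboard context, and $\Omega_{\mathrm{bd}}$ forbids it from rewriting boundary records. Let $E_t$ denote the set of forest nodes and blackboard records listed in the conclusion, namely completed nodes, protected active commitments, resource locks, temporal anchors, safety-critical holds, hard-coupled protected boundary units, and the traces of robots not assigned into the affected scope. We partition $E_t$ into an \emph{exterior} part, lying outside $\mathcal{F}^{\mathrm{aff}}_t$ together with its projected blackboard context, and a \emph{boundary} part, lying inside that scope but belonging to the boundary context defined after~\eqref{eq:affected_subforest}.

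\textbf{Exterior records.} Write the repair proposal $p$ as a typed write set $W(p)$. By the modification hypothesis, $W(p)\subseteq\mathcal{F}^{\mathrm{aff}}_t\cup\mathcal{B}_t|_{\mathcal{F}^{\mathrm{aff}}_t}$. The commit is atomic and applies exactly the verified $W(p)$. Therefore every field outside this set has the same value at $t+1$ as at $t$. This covers unaffected robot traces and any locks, anchors, or holds anchored to nodes outside the scope. Node-anchoring of the blackboard guarantees that "anchored outside" is well defined.

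\textbf{Boundary records.} Here we invoke the definition of $\Omega_{\mathrm{bd}}$ from Appendix~\ref{app:formal_details}. It requires the post-commit values of completed nodes, protected commitments, locks, anchors, downstream interfaces, and hard-coupled protected boundary units to agree with their snapshot values. Since $\Omega_{\mathrm{bd}}=1$, these records are unchanged. Proposition~\ref{prop:verified_commitment} then guarantees that the hard predicates checked alongside $\Omega_{\mathrm{bd}}$ also hold after the commit, so nothing is invalidated indirectly.

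\textbf{Safety holds and temporary constraints.} A repair could try to alter a safety-critical hold indirectly, through an admitted temporary-constraint update. By hypothesis every such update passes verification, and Section~\ref{sec:canonical_state} states that safety-critical holds are never bypassed. Hence no verified update can remove a hold.

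The main obstacle is showing that hard-edge closure really captures everything the repair can influence. A modified unit $u\in\mathcal{U}^{\mathrm{aff}}_t$ might have a hard precedence, mutex, sync, or spatial edge to some unit $w\notin\mathcal{U}^{\mathrm{aff}}_t$, so changing $u$ could appear to change $w$'s effective constraints. I will argue this cannot happen, by contradiction. Closure under $\mathcal{R}^{\mathrm{hard}}_t$ implies that any hard neighbour of an affected unit is itself affected, so $w\in\mathcal{U}^{\mathrm{aff}}_t$, a contradiction. Consequently only soft edges can cross the scope, and soft edges enter neither the closure nor the invariants listed.

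Two technical points need care. First, I will use the versioned-snapshot assumption to ensure that the closure, the verification, and the commit all refer to the same $\mathcal{G}^{\mathrm{ECT}}_t$; stale edges could otherwise allow a leak. Second, I will use the antichain property of $\mathcal{U}_t$ together with $\operatorname{SubtreeClosure}$, so that descendants of affected units are inside the scope and ancestors of exterior nodes are not split. I will check this case carefully, since a shared ancestor subtree is where locality could fail.
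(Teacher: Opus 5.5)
Your frame argument (writes confined to $\mathcal{F}^{\mathrm{aff}}_t$ and its projected blackboard context, $\Omega_{\mathrm{bd}}$ for protected records inside that write scope, and an atomic commit) is exactly the paper's proof. The problem is the step you single out as the main obstacle. You claim that closure under $\mathcal{R}^{\mathrm{hard}}_t$ puts every hard neighbour of an affected unit inside $\mathcal{U}^{\mathrm{aff}}_t$, so that ``only soft edges can cross the scope.'' The paper's model contradicts this in three places:
\begin{itemize}
\item the projected context $\mathcal{B}^{\mathrm{aff}}_t$ is defined to contain records of ``the affected subforest and its hard boundary'';
\item the temporal-anchor condition $t_{\mathrm{int}}^{\mathrm{start}}\geq t_{\mathrm{ext}}+\tau_{\mathrm{ext}}$ is stated for an internal repaired task that depends on a fixed \emph{external} execution segment;
\item the boundary context includes downstream interfaces $\mathcal{I}^{\mathrm{down}}$ and hard-coupled protected boundary units, which are hard-coupled to the repaired region but not repaired with it.
\end{itemize}
So hard edges do cross the boundary of the affected scope, and the closure must stop at completed or protected units or follow edges directionally. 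Your contradiction argument therefore fails. If it held, all of these boundary notions would be empty.

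You should drop that step rather than repair it. The proposition asserts that records stay unchanged, not that there is no coupling across the boundary. Cross-boundary hard couplings are exactly what $\Omega_{\mathrm{bd}}$ freezes, through $\mathcal{T}^{\mathrm{anchor}}$, $\mathcal{I}^{\mathrm{down}}$ and the protected boundary units. That is the paper's route, and it is already contained in your ``boundary records'' case.

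Two smaller points. First, invoking Proposition~\ref{prop:verified_commitment} imports Assumption~\ref{ass:sound_projection} and the full conjunction $\Omega=1$, neither of which is a hypothesis here, and it adds nothing to a frame property. Second, ``safety-critical holds are never bypassed'' says a hold cannot be ignored, not that a verified constraint update cannot edit it. Treat the holds $\mathcal{H}^{\mathrm{safety}}$ as the paper does: they belong to the boundary context $\mathcal{C}^{\mathrm{bd}}_t$, which $\Omega_{\mathrm{bd}}$ protects.
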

The proof is provided in Appendix~\ref{app:proofs}. This result characterizes locality under the current hard ECT closure and boundary predicate, not a guarantee that every disturbance can be repaired.

\begin{algorithm}[!t]
	\footnotesize
	\caption{Governed Task-Forest Blackboard Execution}
	\label{alg:governed_execution}
	\begin{algorithmic}[1]
		\REQUIRE Task forest $\mathcal{F}_t$, blackboard $\mathcal{B}_t$, robot team $\mathcal{M}$
		\ENSURE Updated task forest and blackboard
		
		\WHILE{mission not terminated}
		\STATE Update task, robot, world-belief, resource-lock, and execution-trace records on $\mathcal{B}_t$
		\STATE Apply verified grounding, expansion, completion, or state transitions to $\mathcal{F}_t$
		\STATE Evaluate, expire, or admit temporary constraints on the governed blackboard
		\STATE Derive the exposed execution units and refresh $\mathcal{G}^{\mathrm{ECT}}_t$
		
		\IF{execution disturbance is detected}
		\STATE Localize affected units by hard ECT closure and attempt bounded repair
		\STATE \textbf{continue}
		\ENDIF
		
		\STATE Extract executable frontier $\mathcal{V}^{\mathrm{front}}_t$ using~\eqref{eq:executable_frontier}
		\STATE Generate candidate proposals $\Pi_t$ over $\mathcal{F}_t$, $\mathcal{B}_t$, and $\mathcal{G}^{\mathrm{ECT}}_t$
		\STATE Verify proposals using~\eqref{eq:proposal_verification}
		
		\IF{$\mathcal{P}^{\mathrm{valid}}_t\neq\emptyset$}
		\STATE Select $p^*$ using~\eqref{eq:selected_proposal} and commit it using~\eqref{eq:commit_rule}
		\ELSE
		\STATE Advance execution without a new high-level commitment
		\ENDIF
		\ENDWHILE
	\end{algorithmic}
\end{algorithm}

\section{Experimental Evaluation}
\label{sec:experiments}

\subsection{Experimental Setup}
\label{subsec:experiment_setup}

We evaluate whether verification-gated mission-state governance improves audited progress in intelligent industrial multi-robot systems. The primary industrial validation is an indoor factory scenario with 12 heterogeneous robots, shared workstations, aisle-like resource locks, inspection and rework loops, and cross-tree production dependencies. To test stress behavior beyond the factory topology, we also use remote-construction benchmarks with sparse prior information, delayed grounding, high traversal cost, temporary spatial constraints, and expensive repair. Table~\ref{tab:tro_scenarios} summarizes the generated scenarios.

\begin{table}[!t]
	\centering
	\caption{Benchmark-generation protocol.}
	\label{tab:tro_scenarios}
	\scriptsize
	\setlength{\tabcolsep}{2pt}
	\resizebox{\columnwidth}{!}{%
		\begin{tabular}{lcccl}
			\toprule
			Scenario & Robots & Exec. nodes & Unknown & Disturbance / resource pressure \\
			\midrule
			Small & 6 & 76 & 25\% & Low; no scheduled fault \\
			Medium & 12 & 207 & 45\% & Scheduled faults; moderate obstacles \\
			Large & 20 & 328 & 60\% & Frequent faults; constrained resources \\
			XL probe & 24 & 704 & 70\% & Stress probe; single observed seed \\
			Factory & 12 & 72 & 28--42\% & Workstation mutex; stress sweep \\
			\bottomrule
		\end{tabular}
	}
	\vspace{0.6mm}
	
	\begin{minipage}{0.96\columnwidth}
		\scriptsize\emph{Note:} All scenarios use capability-typed robots, residual-energy checks, terrain/path costs, nonshareable resource locks, delayed grounding, and scheduled execution feedback; rows vary task scale and disturbance pressure.
	\end{minipage}
\end{table}

\begin{figure*}[!t]
\centering
\includegraphics[width=0.95\textwidth]{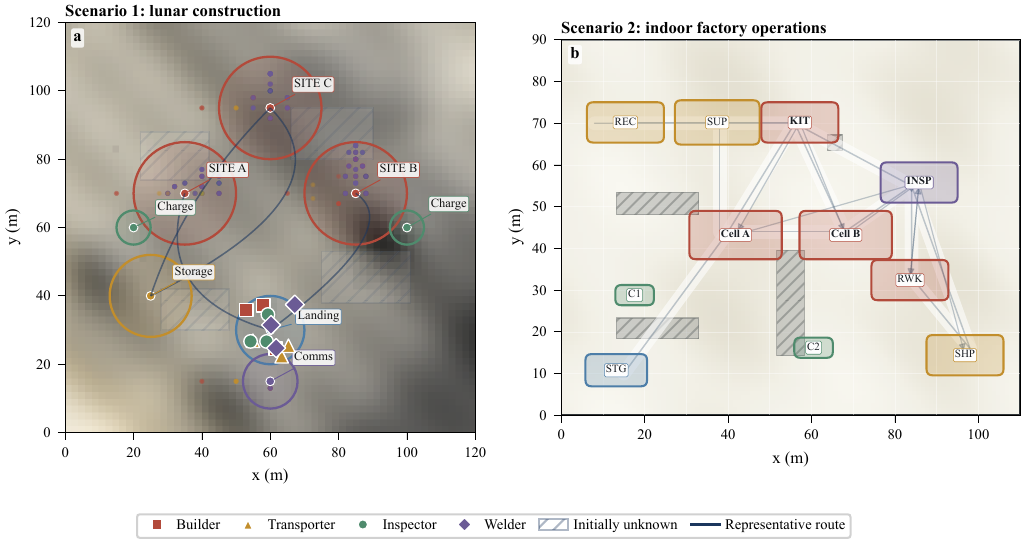}
\caption{Validation domains. Panel (a) shows the remote-construction stress benchmark with terrain cost, resource zones, unknown obstacle regions, and construction sites. Panel (b) shows the indoor factory scenario with replenishment, kitting, assembly, inspection, rework, shipping, and mutex-protected workstations.}
\label{fig:tro_environment}
\end{figure*}

All methods receive the same mission specification, robot capabilities, energy model, initial task forest, and execution feedback stream. We compare classical allocators, simulator reproductions of DART-DAG and LiP-LP, and two internal variants: flat-BB removes the governed ledger, while global-repair replans over all non-frozen tasks instead of using bounded repair. Non-governed methods submit allocator outputs to a shared baseline loop with coarse dependency, capability, energy, passability, duplicate, and lock checks. This makes completion comparable while auditing invalid commitments, abandoned affected work, lock conflicts, and disruption events. Table~\ref{tab:tsmc_baseline_protocol} summarizes the protocol.

\begin{table}[!t]
	\centering
	\caption{Baseline capability protocol.}
	\label{tab:tsmc_baseline_protocol}
	\scriptsize
	\setlength{\tabcolsep}{2pt}
	\resizebox{\columnwidth}{!}{%
		\begin{tabular}{lccccc}
			\toprule
			Method class & Dep. & Audit & Temp. ledger & Verify gate & ECT repair \\
			\midrule
			Ours & Y & Y & Y & Y & Y \\
			DART-DAG & Y & Y & N & N & N \\
			LiP-LP & Y & Y & N & N & N \\
			Greedy/Random & N & Y & N & N & N \\
			Flat-BB variant & Partial & Y & Flat & Y & Y \\
			Global-repair variant & Y & Y & Y & Y & Global \\
			\bottomrule
		\end{tabular}
	}
	\vspace{0.6mm}
	
	\begin{minipage}{0.96\columnwidth}
		\scriptsize\emph{Note:} All methods receive the same initial forest, robot model, execution feedback stream, and shared feasibility audit; only the full framework maintains the governed ledger and ECT-aware repair boundary.
	\end{minipage}
\end{table}

\subsection{Primary Industrial Factory Validation}
\label{subsec:factory_validation}

Table~\ref{tab:tsmc_factory_validation} and Fig.~\ref{fig:tsmc_factory_validation} report the primary industrial validation. The factory scenario contains 72 executable nodes organized into replenishment, kitting, assembly, inspection/rework, and shipping. It tests whether the agentic governance layer can maintain safe progress when multiple robots share workstations, lanes, and cross-tree production dependencies.

\begin{table*}[!t]
	\centering
	\scriptsize
	\caption{Cross-domain indoor factory validation.}
	\label{tab:tsmc_factory_validation}
	\setlength{\tabcolsep}{3pt}
	\resizebox{0.8\textwidth}{!}{%
		\begin{tabular}{lcccccccc}
			\toprule
			Method & Seeds & Safe compl. & Invalid & Lock viol. & Dup. & Viol./100 & Disr. ev. & Time (s) \\
			\midrule
			Ours & 5 & 1.000 $\pm$ 0.000 & 0.0 $\pm$ 0.0 & 0.0 $\pm$ 0.0 & 0.0 $\pm$ 0.0 & 0.0 $\pm$ 0.0 & 0.0 $\pm$ 0.0 & 11.76 $\pm$ 1.43 \\
			Static graph & 5 & 0.178 $\pm$ 0.070 & \textemdash & \textemdash & \textemdash & \textemdash & \textemdash & 1.72 $\pm$ 1.07 \\
			Flat-BB var. & 5 & 0.931 $\pm$ 0.038 & 5.0 $\pm$ 2.7 & 5.4 $\pm$ 3.4 & 5.0 $\pm$ 2.7 & 21.4 $\pm$ 12.1 & \textemdash & 1.99 $\pm$ 0.30 \\
			Global-rep. var. & 5 & 0.775 $\pm$ 0.052 & 1.4 $\pm$ 1.1 & \textemdash & 1.4 $\pm$ 1.1 & 3.9 $\pm$ 3.2 & 42.2 $\pm$ 7.9 & 11.51 $\pm$ 0.91 \\
			DART-DAG & 5 & 0.550 $\pm$ 0.000 & 200.4 $\pm$ 6.8 & 222.0 $\pm$ 6.4 & \textemdash & 586.7 $\pm$ 18.4 & \textemdash & 0.03 $\pm$ 0.00 \\
			LiP-LP & 5 & 0.550 $\pm$ 0.000 & 132.6 $\pm$ 10.6 & 157.4 $\pm$ 6.9 & \textemdash & 402.8 $\pm$ 24.1 & \textemdash & 0.02 $\pm$ 0.00 \\
			\bottomrule
		\end{tabular}
	}
	\vspace{0.6mm}
	\begin{minipage}{0.76\textwidth}
		\scriptsize\emph{Note:} The factory scenario contains 72 executable task nodes, five macro task trees, 16 cross-tree dependencies, and 12 heterogeneous robots. Values are mean $\pm$ standard deviation over five seeds. Raw completion is omitted because it saturates for all non-static methods; dashes mark non-active all-zero diagnostics. Viol./100 aggregates invalid commitments, resource-lock violations, and duplicate assignments per 100 executable tasks.
	\end{minipage}
\end{table*}

\begin{figure*}[!t]
\centering
\includegraphics[width=0.95\textwidth]{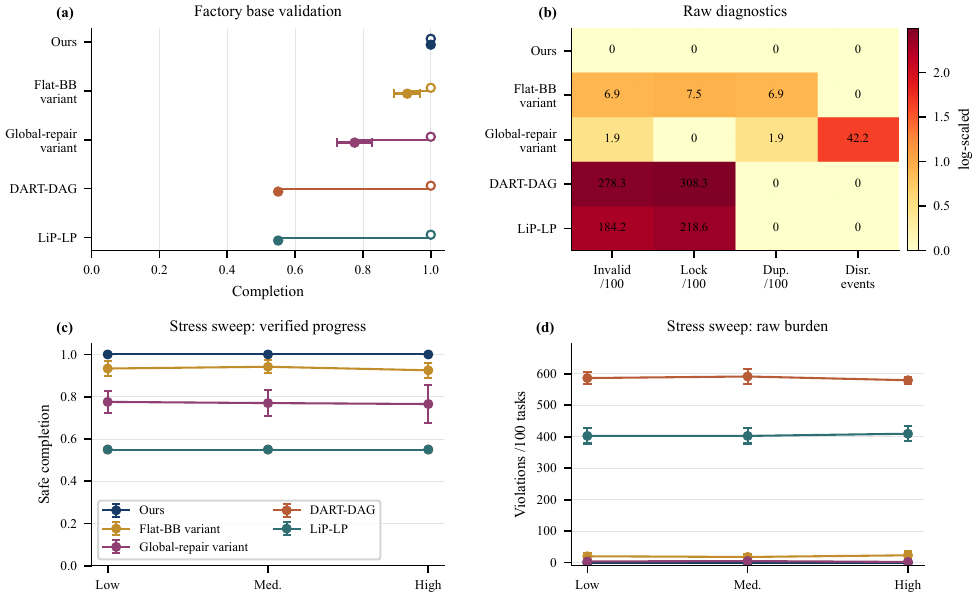}
\caption{Industrial factory validation and stress sweep. Panel (a) compares raw and safety-audited completion in the base factory setting, panel (b) reports raw diagnostics, and panels (c)--(d) show five-seed stress sweeps under increasing disturbance and resource contention.}
\label{fig:tsmc_factory_validation}
\end{figure*}

In the base factory run, the static graph stalls because delayed grounding and structural repair do not refresh the exposed frontier. Several dependency-aware methods reach full raw completion, but their diagnostic records differ: flat-BB incurs duplicate assignments and lock conflicts, DART-DAG and LiP-LP accumulate invalid-commitment and lock burdens, and global-repair reduces some violations by rewriting committed regions, producing disruption node events. The full framework reaches full raw and safety-audited completion with zero recorded invalid, lock, duplicate, and disruption events in the five-seed factory run. The stress sweep preserves the same qualitative pattern as disturbance and resource contention increase.

\subsection{Remote-Construction Stress Comparison}
\label{subsec:construction_stress}

Table~\ref{tab:tro_rq1_main} reports the 30-seed Medium/Large remote-construction stress comparison. The full framework maintains high completion with zero invalid commitments, lock violations, duplicate assignments, abandoned executable nodes, and disruption node events in these modeled runs. The relevant comparison is the joint diagnostic profile rather than raw completion alone: equal displayed completion means indicate equal completed-node counts, not equal mission-state quality.

\begin{table*}[!t]
	\centering
	\scriptsize
	\caption{Main 30-seed audited comparison.}
	\label{tab:tro_rq1_main}
	\setlength{\tabcolsep}{2.3pt}
	\resizebox{\textwidth}{!}{%
		\begin{tabular}{llccccccccc}
			\toprule
			Scenario & Method & Seeds & Raw compl. & Safe compl. & Invalid & Lock & Dup. & Aband. & Disr. ev. & Time (s) \\
			\midrule
			Medium & Ours & 30 & 0.921 $\pm$ 0.172 & 0.921 $\pm$ 0.172 & 0.0 $\pm$ 0.0 & 0.0 $\pm$ 0.0 & 0.0 $\pm$ 0.0 & 0.0 $\pm$ 0.0 & 0.0 $\pm$ 0.0 & 45.47 $\pm$ 7.32 \\
			Medium & DART-DAG & 30 & 0.784 $\pm$ 0.200 & 0.579 $\pm$ 0.338 & 386.8 $\pm$ 787.4 & \textemdash & \textemdash & 32.8 $\pm$ 35.5 & \textemdash & 0.64 $\pm$ 0.26 \\
			Medium & LiP-LP & 30 & 0.767 $\pm$ 0.200 & 0.556 $\pm$ 0.305 & 387.0 $\pm$ 787.6 & \textemdash & \textemdash & 36.5 $\pm$ 34.7 & \textemdash & 0.65 $\pm$ 0.22 \\
			Medium & Greedy & 30 & 0.783 $\pm$ 0.209 & 0.588 $\pm$ 0.325 & 385.5 $\pm$ 784.8 & \textemdash & \textemdash & 33.2 $\pm$ 35.7 & \textemdash & 0.59 $\pm$ 0.24 \\
			Medium & Random & 30 & 0.757 $\pm$ 0.219 & 0.536 $\pm$ 0.370 & 380.0 $\pm$ 774.1 & \textemdash & \textemdash & 40.2 $\pm$ 42.1 & \textemdash & 0.60 $\pm$ 0.23 \\
			Medium & Flat-BB var. & 30 & 0.922 $\pm$ 0.171 & 0.840 $\pm$ 0.157 & 17.0 $\pm$ 5.4 & 1.6 $\pm$ 1.3 & 17.0 $\pm$ 5.4 & \textemdash & \textemdash & 47.91 $\pm$ 8.28 \\
			Medium & Global-rep. var. & 30 & 0.921 $\pm$ 0.172 & 0.624 $\pm$ 0.229 & 19.9 $\pm$ 18.7 & \textemdash & 19.9 $\pm$ 18.7 & \textemdash & 154.7 $\pm$ 146.2 & 41.03 $\pm$ 7.34 \\
			\midrule
			Large & Ours & 30 & 0.898 $\pm$ 0.182 & 0.898 $\pm$ 0.182 & 0.0 $\pm$ 0.0 & 0.0 $\pm$ 0.0 & 0.0 $\pm$ 0.0 & 0.0 $\pm$ 0.0 & 0.0 $\pm$ 0.0 & 316.16 $\pm$ 69.19 \\
			Large & DART-DAG & 30 & 0.595 $\pm$ 0.190 & 0.257 $\pm$ 0.260 & 865.2 $\pm$ 1595.3 & \textemdash & \textemdash & 118.1 $\pm$ 68.1 & \textemdash & 1.29 $\pm$ 0.50 \\
			Large & LiP-LP & 30 & 0.607 $\pm$ 0.189 & 0.272 $\pm$ 0.274 & 865.3 $\pm$ 1595.5 & \textemdash & \textemdash & 114.2 $\pm$ 65.5 & \textemdash & 1.25 $\pm$ 0.40 \\
			Large & Greedy & 30 & 0.589 $\pm$ 0.191 & 0.257 $\pm$ 0.285 & 986.9 $\pm$ 1664.8 & \textemdash & \textemdash & 119.0 $\pm$ 62.7 & \textemdash & 1.23 $\pm$ 0.39 \\
			Large & Random & 30 & 0.555 $\pm$ 0.210 & 0.213 $\pm$ 0.268 & 1096.7 $\pm$ 1705.0 & \textemdash & \textemdash & 123.2 $\pm$ 77.5 & \textemdash & 1.29 $\pm$ 0.44 \\
			Large & Flat-BB var. & 30 & 0.898 $\pm$ 0.182 & 0.816 $\pm$ 0.168 & 27.1 $\pm$ 8.3 & 1.3 $\pm$ 1.2 & 27.1 $\pm$ 8.3 & \textemdash & \textemdash & 364.82 $\pm$ 94.46 \\
			Large & Global-rep. var. & 30 & 0.898 $\pm$ 0.182 & 0.895 $\pm$ 0.181 & 0.3 $\pm$ 1.5 & \textemdash & 0.3 $\pm$ 1.5 & \textemdash & 2.0 $\pm$ 10.8 & 277.37 $\pm$ 57.23 \\
			\bottomrule
		\end{tabular}
	}
	\vspace{0.6mm}
	\begin{minipage}{0.96\textwidth}
		\scriptsize\emph{Note:} Medium and Large scenarios are evaluated after removing the redundant Hungarian relaxation row. Values are mean $\pm$ standard deviation over seeds 42--71. Diagnostic columns are raw event counts; dashes mark all-zero diagnostic channels that are not active failure modes for that row.
	\end{minipage}
\end{table*}

\begin{figure*}[!t]
\centering
\includegraphics[width=0.86\textwidth]{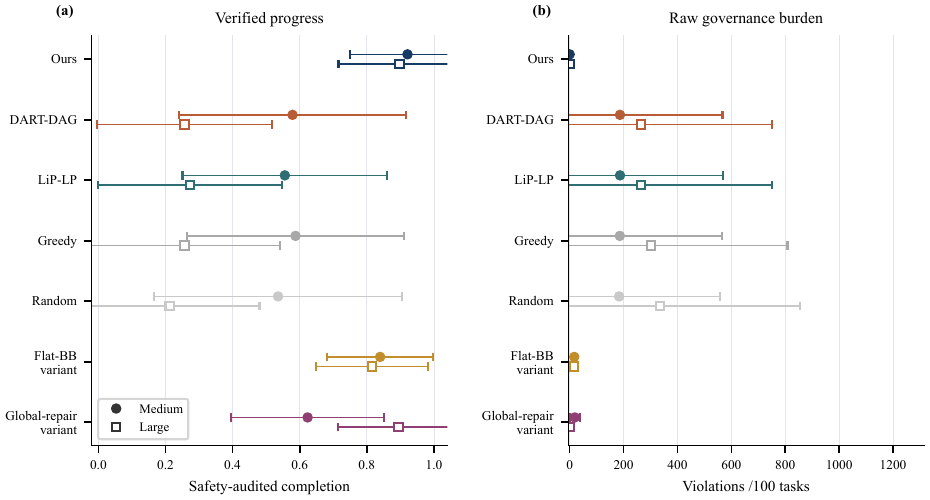}
\caption{Main 30-seed audited remote-construction comparison on Medium and Large. Filled circles denote Medium means and open squares denote Large means; error bars show one standard deviation over seeds 42--71.}
\label{fig:tro_rq1}
\end{figure*}

Fig.~\ref{fig:tro_rq1} shows that the proposed governance layer preserves verified progress with near-zero modeled governance violations. Randomized and greedy allocators are not bare random execution; their outputs pass through the shared feasibility audit. The evidence is therefore not strict dominance on every scalar metric, but the ability to maintain audited mission-state consistency while continuing long-horizon execution.

\begin{figure*}[!t]
\centering
\includegraphics[width=0.86\textwidth]{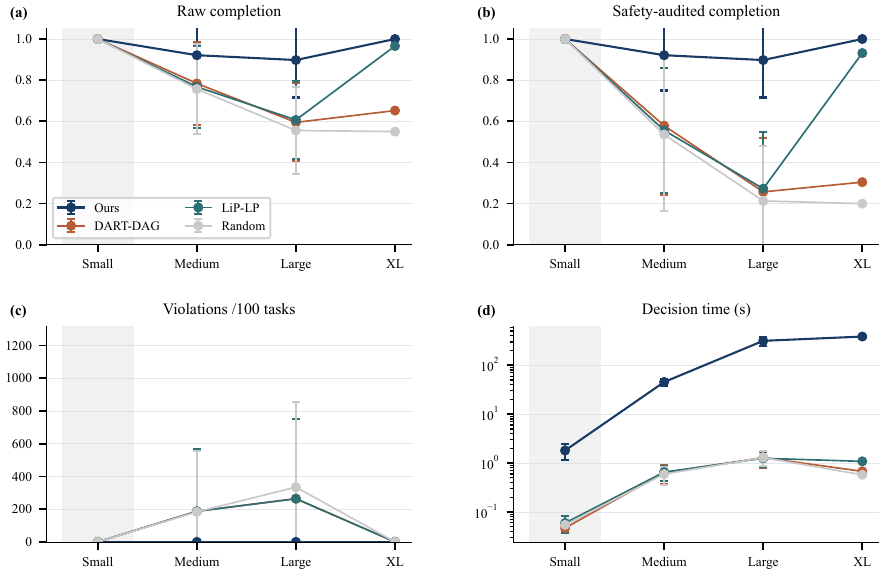}
\caption{Observed task-scale trends across evaluated scenarios. Solid markers connect measured means only; no predicted or extrapolated values are shown. Small uses five seeds, Medium and Large use 30 seeds, and XL is a single observed stress probe.}
\label{fig:tro_task_scale}
\end{figure*}

\subsection{Ablation, Repair, and Runtime Diagnostics}
\label{subsec:ablation_runtime}

Table~\ref{tab:tro_ablation} and Fig.~\ref{fig:tro_ablation} summarize the Medium ablation study. Static graphs stall because hidden executable work is not exposed after obstacle discovery. Removing verification can inflate raw task counts by accepting proposals that should be rejected, so the ablation comparison starts from safety-audited completion and raw diagnostics. Global repair increases disruption events, no structural repair abandons affected downstream nodes, and no verifier admits invalid commitments. The full method is the only variant that combines high audited progress with zero invalid commitments and zero disruption node events.

\begin{table*}[!t]
	\centering
	\caption{Medium-scenario ablation results.}
	\label{tab:tro_ablation}
	\resizebox{0.8\textwidth}{!}{%
		\begin{tabular}{lccccc}
			\toprule
			Method & Safe compl. & Invalid & Abandoned & Disr. ev. & Recovery \\
			\midrule
			Ours & 0.921 $\pm$ 0.172 & 0.0 $\pm$ 0.0 & 0.0 $\pm$ 0.0 & 0.0 $\pm$ 0.0 & 1.3 $\pm$ 1.1 \\
			Static graph & 0.024 $\pm$ 0.008 & \textemdash & \textemdash & \textemdash & \textemdash \\
			Flat-BB var. & 0.845 $\pm$ 0.158 & 16.0 $\pm$ 5.2 & \textemdash & \textemdash & 1.0 $\pm$ 1.0 \\
			No verifier & 0.902 $\pm$ 0.172 & 26.7 $\pm$ 51.4 & \textemdash & \textemdash & 1.2 $\pm$ 1.1 \\
			Global-rep. var. & 0.622 $\pm$ 0.230 & 21.0 $\pm$ 17.3 & \textemdash & 152.3 $\pm$ 135.7 & 1.2 $\pm$ 1.1 \\
			No structural repair & 0.593 $\pm$ 0.351 & \textemdash & 42.5 $\pm$ 51.0 & \textemdash & \textemdash \\
			No risk cost & 0.922 $\pm$ 0.171 & 0.0 $\pm$ 0.0 & 0.0 $\pm$ 0.0 & 0.0 $\pm$ 0.0 & 2.5 $\pm$ 1.4 \\
			\bottomrule
		\end{tabular}
	}
	\vspace{0.6mm}
	\begin{minipage}{0.78\textwidth}
		\footnotesize\emph{Note:} Values are mean $\pm$ standard deviation over 30 seeds in the structural instrumentation suite. Raw completion and energy per completed node are omitted because they are misleading for variants that complete only an easy subset or accept verifier-rejected proposals; dashes mark non-active all-zero diagnostic channels.
	\end{minipage}
\end{table*}

\begin{figure*}[!t]
\centering
\includegraphics[width=0.82\textwidth]{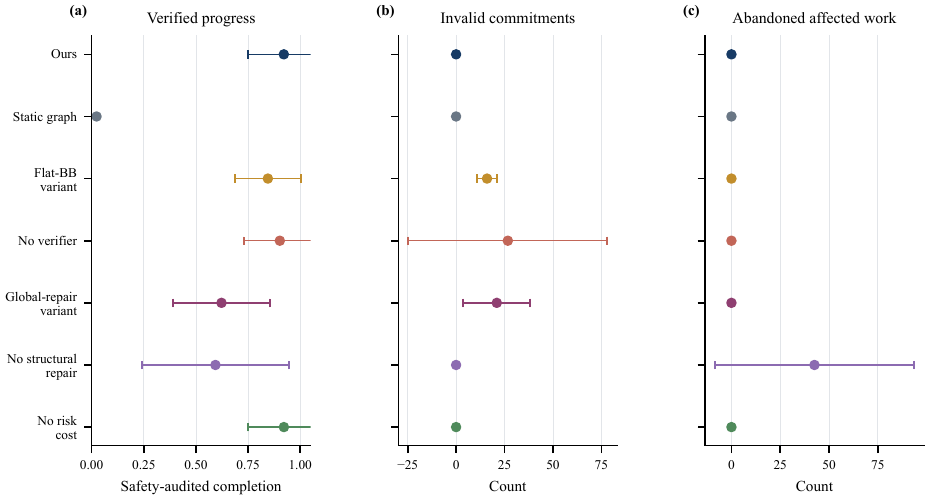}
\caption{Component ablations on the Medium scenario. Panel (a) reports safety-audited completion, panel (b) reports invalid commitments, and panel (c) reports abandoned affected work.}
\label{fig:tro_ablation}
\end{figure*}

\begin{table*}[!t]
	\centering
	\caption{Full-method runtime instrumentation.}
	\label{tab:tro_runtime_breakdown}
	\scriptsize
	\setlength{\tabcolsep}{3pt}
	\resizebox{\textwidth}{!}{%
		\begin{tabular}{lccccccc}
			\toprule
			Scenario & Seeds & Task nodes & Decision epochs & Cand. evals & Cand./epoch & Verify (s) & Total (s) \\
			\midrule
			Small & 30 & 76.1 $\pm$ 0.3 & 76.0 [76.0,77.0] & 156.0 [156.0,156.8] & 1.88 $\pm$ 0.39 & 1.05 $\pm$ 0.39 & 1.37 $\pm$ 0.42 \\
			Medium & 30 & 208.3 $\pm$ 1.1 & 159.0 [151.2,221.8] & 1322.5 [1256.2,1544.0] & 8.18 $\pm$ 1.29 & 16.45 $\pm$ 2.56 & 17.67 $\pm$ 2.67 \\
			Large & 30 & 330.0 $\pm$ 1.2 & 255.0 [248.0,3701.5] & 7960.0 [7880.0,95265.8] & 29.86 $\pm$ 2.82 & 149.78 $\pm$ 35.73 & 153.56 $\pm$ 33.23 \\
			XL & 1 & 707.0 & 391.0 & 13622.0 & 34.84 & 383.61 & 386.76 \\
			\bottomrule
		\end{tabular}
	}
	\vspace{0.6mm}
	\begin{minipage}{0.96\textwidth}
		\scriptsize\emph{Note:} Results are from the repair/scaling suite. Task nodes and time columns report mean $\pm$ standard deviation for multi-seed settings; decision epochs and candidate evaluations report median [IQR]. Event-triggered repair counts are omitted because disturbance frequency is scenario dependent. The XL row is a single observed stress probe.
	\end{minipage}
\end{table*}

\begin{figure*}[!t]
\centering
\includegraphics[width=0.86\textwidth]{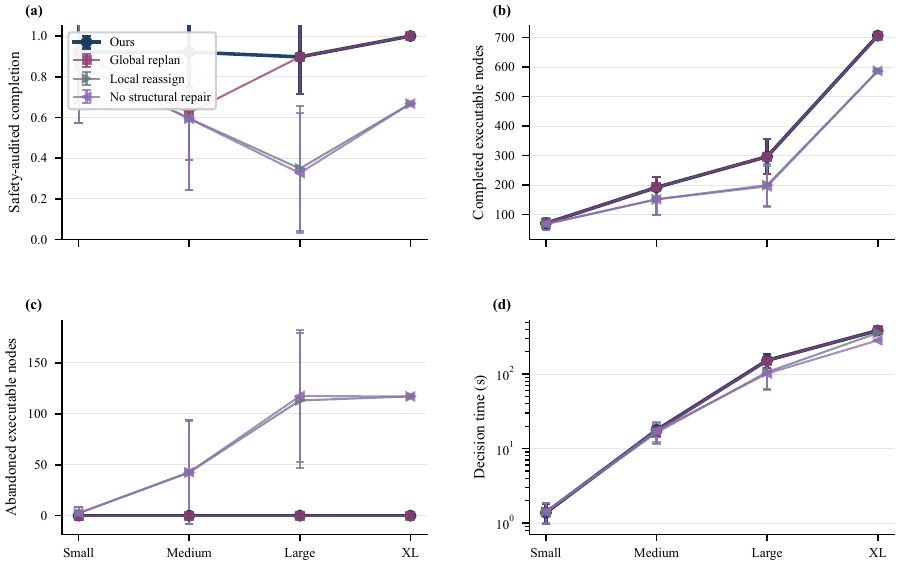}
\caption{Bounded subforest repair versus replanning alternatives across scenario scale. The XL point is a one-seed stress probe; completed and abandoned executable nodes are reported because affected-subforest size is not informative when a method stalls before repair is committed.}
\label{fig:tro_repair}
\end{figure*}

The full method is computationally heavier than simple assignment baselines because it maintains the forest--blackboard projection, verifies proposals, refreshes the execution coupling topology, and localizes repair. The runtime table instruments only the full method and identifies dominant internal costs. Times are Python simulation wall-clock times for mission-state decisions and exclude physical execution, low-level motion planning, sensing, and robot control. Verification dominates because the executable frontier increases the number of candidate proposals, while bounded repair remains local when disturbance closures are local.

\section{Discussion}
\label{sec:discussion}

The results support a bounded claim: agentic proposal generation for industrial multi-robot systems benefits from a deterministic mission-state governance layer. The framework does not claim that every scalar metric is universally improved. Instead, it shows that generated assignments, repairs, and constraint updates can be converted into verified state transitions that preserve modeled hard constraints and local repair boundaries.

The analytical guarantees apply only to represented mission-state predicates under a conservative projected state. They do not guarantee optimal allocation, physical collision avoidance, task success under unmodeled low-level failures, or eventual completion under arbitrary disturbances. In deployment, the blackboard would need conservative inputs from localization, sensing, motion planning, controller-safety, and industrial digital-twin modules. Stale or optimistic lower-layer information would invalidate the assumptions behind the verifier.

The experiments expose four recurrent failure mechanisms. Static graphs suffer frontier starvation when delayed or recovery work cannot be grounded. Flat blackboard state and unverified proposals produce unsafe progress through duplicate assignments, stale locks, missing temporary-constraint updates, or invalid commitments. Local reassignment without structural repair abandons downstream nodes when a blocked branch changes preconditions. Global repair can restore feasibility but disturbs protected work outside the affected subforest. These failures correspond directly to removing the evolving hierarchy, governed blackboard, verifier, or bounded repair invariant.

The current implementation remains a high-level mission-state simulator. It does not yet model actuator faults, localization drift, perception false positives, degraded wireless links, or continuous collision avoidance. The safety-audited completion score is diagnostic rather than normative; deployments should report raw violation and disruption counts and tune aggregate penalties to mission risk. Future work should connect the verifier to low-level motion and sensing certificates, study decentralized blackboard substrates, and evaluate controlled access to real industrial data and hardware-in-the-loop experiments.

\section{Conclusion}

This paper presented a verification-gated agentic mission-state governance framework for intelligent industrial multi-robot systems. The framework separates persistent task hierarchy from online execution records, derives cross-branch coupling for verification and bounded repair, and admits agentic proposals only through deterministic verification and atomic commit. Factory validation, 30-seed remote-construction stress benchmarks, structural ablations, and scale probes show that this architecture improves verified mission-state progress and reduces invalid or disruptive state transitions under modeled mission predicates. The framework should be viewed as a high-level governance layer for agentic industrial autonomy, with guarantees bounded by represented constraints and lower-layer feasibility information.

\appendix\section{Formal State and Verifier Details}
\label{app:formal_details}

This appendix records the formal details that support the condensed method section. They are kept outside the main text to preserve readability, but they specify the state variables, proposal interface, verifier gates, and bounded-repair fallback used in the simulator.

\subsection{State Objects and Mission-State Invariant}

The modeled invariant in~\eqref{eq:state_invariant} is decomposed as
\begin{equation}
I =
I_{\mathrm{dep}}\land
I_{\mathrm{cap}}\land
I_{\mathrm{eng}}\land
I_{\mathrm{temp}}\land
I_{\mathrm{res}}\land
I_{\mathrm{bd}},
\end{equation}
where the terms respectively check dependency consistency, robot-capability feasibility, residual-energy feasibility, temporal consistency, resource exclusivity, and repair-boundary preservation. These predicates are evaluated over the two canonical state objects below; the ECT is derived from them and is not itself committed state.

The task forest is written as
\begin{equation}
\mathcal{F}_t=(\mathcal{V}_t,\mathcal{E}^{\mathrm{h}}_t,\mathcal{E}^{\mathrm{p}}_t,\mathcal{E}^{\mathrm{c}}_t,\chi_t),
\end{equation}
where $\mathcal{E}^{\mathrm{h}}_t$ stores decomposition edges, $\mathcal{E}^{\mathrm{p}}_t$ stores precedence or enabling dependencies, $\mathcal{E}^{\mathrm{c}}_t$ stores declarative mission annotations, and $\chi_t$ stores node states. A node contract is
\begin{equation}
\zeta(v)=(P_v,O_v,R_v,C_v,T_v,Z_v,S_v),
\end{equation}
where the entries denote preconditions, expected outputs, resource requirements, candidate robots or teams, temporal anchors, spatial zones, and compact verification summaries. For grounded executable nodes, task attributes are represented by $\theta_v=\langle x_v,\kappa_v,W_v,e_v,\rho_v,\phi_v\rangle$.

The governed blackboard is written as
\begin{equation}
\mathcal{B}_t=(\mathcal{B}^{\mathrm{task}}_t,\mathcal{B}^{\mathrm{robot}}_t,\mathcal{B}^{\mathrm{world}}_t,\mathcal{B}^{\mathrm{commit}}_t,\mathcal{B}^{\mathrm{prop}}_t,\mathcal{B}^{\mathrm{ver}}_t,\mathcal{B}^{\mathrm{cstr}}_t).
\end{equation}
A scene-temporary constraint record is
\begin{equation}
c=\langle\tau,\operatorname{scope},g,\phi,s,q,\eta,\lambda\rangle,
\end{equation}
where $\tau$ is the constraint type, $\operatorname{scope}$ is the affected node/resource/robot/zone/unit scope, $g$ is the activation guard, $\phi$ is the deterministic check, $s$ is the source, $q$ is the status, $\eta$ stores evidence, and $\lambda$ stores lifetime or invalidation conditions. Status values include candidate, protective, verified-soft, verified-hard, rejected, and expired.

The blackboard also maintains robot committed traces
\begin{equation}
\Gamma_r(t)=(a_{r,1},a_{r,2},\ldots,a_{r,m_r}),
\end{equation}
and node-bound blackboard records $v\leftrightarrow b_v(t)$ containing readiness, owner, resource reservation, predicted interval, diagnostics, and active constraints. This binding is what lets the verifier test a proposal against both hierarchy-level contracts and online execution commitments.

The antichain execution-unit cut used for the ECT satisfies
\begin{equation}
\forall v_i,v_j\in\mathcal{C}^{\mathrm{cut}}_t,\quad
v_i\notin\operatorname{Desc}_{\mathcal{F}_t}(v_j),\quad
v_j\notin\operatorname{Desc}_{\mathcal{F}_t}(v_i),
\end{equation}
with units $u_i=\operatorname{Subtree}_{\mathcal{F}_t}(v_i)$ and anchors $\alpha(u_i)=v_i$. The hard ECT edge set is
\begin{equation}
\mathcal{R}^{\mathrm{hard}}_t =
\mathcal{R}^{\mathrm{pre}}_t\cup
\mathcal{R}^{\mathrm{flow}}_t\cup
\mathcal{R}^{\mathrm{mutex}}_t\cup
\mathcal{R}^{\mathrm{sync}}_t\cup
\mathcal{R}^{\mathrm{spatial}}_t .
\end{equation}
Only verified-hard blackboard constraints can induce hard ECT edges. Candidate constraints remain hypotheses, verified-soft constraints affect ranking or costs, and protective holds are enforced by the verifier rather than promoted into persistent topology.

\subsection{Typed Proposals and Verification Gates}

The framework generalizes bids, repairs, deferrals, diagnostic insertions, and constraint updates into a common proposal interface:
\begin{equation}
p=\left\langle
\alpha,
\mathcal{S},
C,
\Delta\mathcal{F},
\Delta\mathcal{B},
J
\right\rangle,
\end{equation}
where $\alpha$ is the proposal type, $\mathcal{S}\subseteq\mathcal{V}_t$ is the affected task-forest scope, $C\subseteq\mathcal{M}$ is the selected robot or coalition, $\Delta\mathcal{F}$ and $\Delta\mathcal{B}$ are the proposed forest and blackboard updates, and $J$ is the proposal cost. The type belongs to
\begin{equation}
\begin{aligned}
\alpha\in
\{&
\mathrm{assign},\mathrm{defer},\mathrm{swap},\mathrm{insert},\\
&\mathrm{diagnose},\mathrm{relax},
\mathrm{repair},\mathrm{constraint}\}.
\end{aligned}
\end{equation}
Heuristic, optimization, rule-based, LLM, or VLM modules may generate such proposals, but none of them can directly modify $\mathcal{F}_t$ or $\mathcal{B}_t$.

\begin{table*}[!t]
	\centering
	\caption{Deterministic verification predicates before commitment.}
	\label{tab:tro_verifier_predicates}
	\scriptsize
	\setlength{\tabcolsep}{3pt}
	\begin{tabular}{p{0.12\textwidth}p{0.22\textwidth}p{0.29\textwidth}p{0.29\textwidth}}
		\toprule
		Predicate & Check & Main information & Rejects if \\
		\midrule
		$\Omega_{\mathrm{dep}}$ 
		& Dependency / enabling 
		& Forest precedence; node contracts; ECT pre/flow edges; ordering subgraph 
		& Cycles that cannot be resolved, unsatisfied predecessors, frozen dependencies, or invalid enabling outputs. \\
		
		$\Omega_{\mathrm{cap}}$ 
		& Capability coverage 
		& Required capabilities $\kappa_v$; robot capabilities $\kappa_r$; coalition availability 
		& The assigned robot or coalition lacks required capabilities or is unavailable. \\
		
		$\Omega_{\mathrm{eng}}$ 
		& Residual energy 
		& $E_r(t)$; traversal and execution estimates; reserve bound 
		& Any assigned robot falls below the reserve energy bound. \\
		
		$\Omega_{\mathrm{temp}}$ 
		& Temporal consistency 
		& Predicted intervals; robot execution traces; active commitments; temporal anchors; ECT sync edges 
		& Timing violates predecessors, anchors, commitments, robot execution order, or synchronization constraints. \\
		
		$\Omega_{\mathrm{res}}$ 
		& Resource exclusivity 
		& Resource locks; robot execution traces; requirements $\rho_v$; predicted intervals; ECT mutex/spatial edges 
		& Non-shareable resources overlap, a required resource is locked, or hard spatial/resource couplings conflict. \\
		
		$\Omega_{\mathrm{bd}}$ 
		& Repair boundary and safety 
		& Affected subforest; protected commitments; downstream interfaces; verified-hard constraints; safety-critical holds; ECT hard closure 
		& Repair crosses protected boundaries, bypasses safety-critical constraints, or leaves hard-coupled protected units outside the affected scope. \\
		\bottomrule
	\end{tabular}
\end{table*}

For the construction benchmark, assignment proposals use a risk-aware travel model. For robot $r$ and task $v$,
\begin{equation}
E_{\mathrm{trav}}^{(r\rightarrow v)}=
\eta_r^E\sum_{k\in P^{(r\rightarrow v)}}(\mu_E(x_k)+\beta\sigma_E(x_k))\Delta s,
\end{equation}
\begin{equation}
\Delta t_{\mathrm{trav}}^{C}(v)=
\max_{r\in C}\eta_r^T\sum_{k\in P^{(r\rightarrow v)}}(\mu_T(x_k)+\beta\sigma_T(x_k))\Delta s,
\end{equation}
with coalition execution duration
\begin{equation}
\tau_v^C=
\frac{W_v}{\sum_{r\in C}P_r},
\end{equation}
and assignment cost
\begin{equation}
B(C,v)=
\sum_{r\in C}E_{\mathrm{trav}}^{(r\rightarrow v)}+\gamma\tau_v^C.
\end{equation}
Coalitions must cover the required capabilities and satisfy the residual-energy bound:
\begin{align}
\kappa_v&\subseteq\bigcup_{r\in C}\kappa_r,\\
E_{\mathrm{trav}}^{(r\rightarrow v)}+e_v^{(r)}+E_{\mathrm{reserve}}^{(r)}
&\leq E_r(t),\quad \forall r\in C.
\end{align}

Atomic commitment is allowed only after verification:
\begin{equation}
\begin{aligned}
(\mathcal{F}_{t+1},\mathcal{B}_{t+1})
=
\mathrm{Commit}(p,\mathcal{F}_t,\mathcal{B}_t),\\
\text{only if }\quad
\Omega(p,\mathcal{F}_t,\mathcal{B}_t,\mathcal{G}_t)=1 .
\end{aligned}
\end{equation}
Under the modeled predicates, the intended invariant-preservation relation is
\begin{equation}
I(\mathcal{F}_t,\mathcal{B}_t)=1
\land
\Omega(p,\mathcal{F}_t,\mathcal{B}_t,\mathcal{G}_t)=1
\Rightarrow
I(\mathcal{F}_{t+1},\mathcal{B}_{t+1})=1.
\end{equation}
This implication is limited to represented task contracts, blackboard records, ECT couplings, and verifier predicates.

\subsection{Repair Boundary, Diagnostics, and Fallback}

For a disturbance $\epsilon_t$, the implicated forest nodes $\mathcal{V}^{\mathrm{dist}}_t$ induce disturbed units
\begin{equation}
\mathcal{U}^{\mathrm{dist}}_t=
\{u_i\in\mathcal{U}_t\mid
\operatorname{scope}(u_i)\cap\mathcal{V}^{\mathrm{dist}}_t\neq\emptyset\},
\end{equation}
which are expanded by hard-edge closure:
\begin{equation}
\mathcal{U}^{\mathrm{aff}}_t=
\operatorname{cl}_{\mathcal{R}^{\mathrm{hard}}_t}
(\mathcal{U}^{\mathrm{dist}}_t).
\end{equation}
The affected blackboard context is
\begin{equation}
\mathcal{B}^{\mathrm{aff}}_t=
\operatorname{Project}(\mathcal{B}_t,\mathcal{F}^{\mathrm{aff}}_t),
\end{equation}
including robot traces, resource locks, temporary constraints, world beliefs, active commitments, and verifier records associated with the affected subforest and its hard boundary.

The boundary context preserved during repair is
\begin{equation}
\mathcal{C}^{\mathrm{bd}}_t=
\{\mathcal{V}^{\mathrm{done}},
\mathcal{A}^{\mathrm{fixed}},
\mathcal{L}^{\rho},
\mathcal{T}^{\mathrm{anchor}},
\mathcal{I}^{\mathrm{down}},
\mathcal{H}^{\mathrm{safety}}\},
\end{equation}
where the sets denote completed nodes, protected active commitments, resource locks, temporal anchors, downstream interfaces, and safety-critical holds. Temporal anchors require
\begin{equation}
t_{\mathrm{int}}^{\mathrm{start}}\geq
t_{\mathrm{ext}}+\tau_{\mathrm{ext}}
\end{equation}
for an internal repaired task that depends on a fixed external execution segment. Resource conflicts are detected when mutually exclusive resources overlap in predicted execution intervals:
\begin{align}
\rho_v\cap\rho_u&\neq\emptyset,\\
\max(t_v^{\mathrm{start}},t_u^{\mathrm{start}})
&<
\min(t_v^{\mathrm{start}}+\tau_v^C,
t_u^{\mathrm{start}}+\tau_u^{C'}).
\end{align}

Candidate repair proposals are generated only inside the affected subforest:
\begin{equation}
\mathcal{P}^{\mathrm{repair}}_t=
\operatorname{Repair}(\mathcal{F}^{\mathrm{aff}}_t,
\mathcal{B}^{\mathrm{aff}}_t,
\mathcal{C}^{\mathrm{bd}}_t),
\end{equation}
with valid set
\begin{equation}
\mathcal{P}^{\mathrm{repair,valid}}_t=
\{p\in\mathcal{P}^{\mathrm{repair}}_t\mid
\Omega(p,\mathcal{F}_t,\mathcal{B}_t,\mathcal{G}_t)=1\}.
\end{equation}
If no valid repair exists, deterministic fallback optimization is invoked only within the same affected scope:
\begin{equation}
\mathcal{P}^{\mathrm{fb}}_t=
\operatorname{FallbackOpt}(\mathcal{F}^{\mathrm{aff}}_t,
\mathcal{B}^{\mathrm{aff}}_t,
\mathcal{C}^{\mathrm{bd}}_t),
\end{equation}
\begin{equation}
\mathcal{P}^{\mathrm{fb,valid}}_t=
\{p\in\mathcal{P}^{\mathrm{fb}}_t\mid
\Omega(p,\mathcal{F}_t,\mathcal{B}_t,\mathcal{G}_t)=1\}.
\end{equation}
If this set is empty, the affected branch is marked blocked or deferred instead of committing an unchecked fallback. Non-safety-critical cycles may be handled by expiring stale weak constraints, refining one involved unit, or inserting one bounded diagnostic branch; safety-critical holds are never bypassed.

\subsection{Proof Details for the Main Propositions}
\label{app:proofs}

\begin{proof}[Proof of Proposition~\ref{prop:verified_commitment}]
The verification predicate in~\eqref{eq:proposal_verification} is a conjunction of the hard-constraint predicates represented in the current forest, blackboard, and ECT. If $\Omega(p,\mathcal{F}_t,\mathcal{B}_t,\mathcal{G}_t)=1$, each predicate is true on the conservative projected state specified by Assumption~\ref{ass:sound_projection}. Atomic commitment applies the task-forest and blackboard update as one synchronized transition, so no intermediate state can expose a partially updated hierarchy, resource table, robot trace, or constraint ledger. Therefore the post-commit state preserves the modeled hard constraints explicitly checked by the verifier.
\end{proof}

\begin{proof}[Proof of Proposition~\ref{prop:bounded_locality}]
The affected unit set contains the disturbed units and all units reachable through verified-hard ECT couplings. Lifting this set through the anchoring map $\alpha$ yields the only subforest that the repair proposal is permitted to modify. The boundary predicate $\Omega_{\mathrm{bd}}$ rejects any proposal that changes completed nodes, protected commitments, resource locks, temporal anchors, safety-critical holds, hard-coupled protected boundary units, or unaffected traces outside this scope. Because the commit is atomic, no intermediate state can modify external forest structure without the corresponding blackboard update. Therefore the protected external state remains invariant under the committed repair.
\end{proof}

\bibliographystyle{elsarticle-num}
\bibliography{references}

\end{document}